\newtheorem{theorem}{Theorem}
\title{Enhancing the Performance of Global Model by Improving the Adaptability of Local Models in Federated Learning}
\author{
Wujun Zhou
\and
Shu Ding\and
Zelin Li\And
Wei Wang\\
\affiliations
 National Key Laboratory for Novel Software Technology, Nanjing University, China\\
 School of Artificial Intelligence, Nanjing University, China\\
\emails
\{zhouwujun, dings, lizelin, wangw\}@lamda.nju.edu.cn
}
\begin{document}

\maketitle
\begin{abstract}
Federated learning  enables the clients to collaboratively train a global model, which is aggregated from local models. Due to the heterogeneous data distributions over clients and data privacy in federated learning, it is difficult to train local models to achieve a well-performed global model. In this paper, we introduce the adaptability of local models, i.e., the average performance of local models on data distributions over clients, and enhance the performance of the global model by improving the adaptability of local models. Since each client does not know the data distributions over other clients, the adaptability of the local model cannot be directly optimized. First, we provide the property of an appropriate local model which has good adaptability on the data distributions over clients. Then, we formalize the property into the local training objective with a constraint and propose a feasible solution to train the local model. Extensive experiments on federated learning benchmarks demonstrate that our method significantly improves the adaptability of local models and achieves a well-performed global model that consistently outperforms the baseline methods.
\end{abstract}

\section{Introduction}
\label{sec: intro}
Owing to the accessibility of large-scale datasets, deep neural networks have achieved great success over the years
\citep{he2016deep}. However, a vast amount of training data may be distributed across plenty of clients, posing challenges on how to effectively utilize the client data \citep{Li2019ASO, Kairouz2019AdvancesAO}. To handle these challenges, federated learning \citep{mcmahan2017communication} has emerged as an distributed learning paradigm with privacy-preserving property. Federated learning enables the clients to collaboratively learn a well-performed global model by sharing local updates. It demonstrates the strength in fully utilizing data from the clients without requiring the upload of private data to the server. Due to its communication efficiency and privacy preservation, federated learning has been widely used in multiple domains \citep{hard2020training,kang2020reliable,jiang2021fedspeech,zheng2021federated,khan2021federated,adnan2022federated}. 

However, the generalization performance of federate learning relies on the assumption that the client data are independent and identically distributed (IID) \citep{mcmahan2017communication}. In real scenarios, federated learning often encounters data heterogeneity, where the clients hold Non-IID data. As mentioned in earlier studies, data heterogeneity affects the effectiveness of federated learning \citep{zhao2018federated,li2020federated}. Recent studies have proposed numerous methods to address the issue of degradation of model generalization performance in Non-IID settings. A line of work relies on clients uploading extra auxiliary variables during the model upload phase, such as gradients \citep{Karimireddy2019SCAFFOLDSC,Dai2022TacklingDH}, statistical distribution information \citep{9141436}, etc., to modify server-side model updates. However, these methods introduce additional communication costs and the reliability of the uploaded information may impact the effectiveness of federated learning \citep{li2022federated}. 

Another line of work focuses on aligning the local models with the global model. \citep{li2020federated, acar2021federated} propose adding penalty terms to local objectives to prevent local models from deviating from the global model during local training. \citep{chen2021bridging, zhang2022federated} propose replacing the local loss with a balanced loss to improve the local model's performance on classes with few samples. These methods aim to make local models converge near the global optimal stationary point. However, since each local model is trained with its own data and does not know the data distributions over other clients, these methods still struggle to achieve effective alignment between the global and local models. 

In this paper, we introduce the adaptability of local models, i.e., the average performance of local models on data distributions over clients, and enhance the performance of the global model by improving the adaptability of local models. 
Since each client does not know the data distributions of other clients, the adaptability of the local model cannot be directly optimized. First, we provide the property of an appropriate local model which has good adaptability on the data distributions over clients. Then, we formalize the property into the local training objective with a constraint and propose a feasible solution to train the local model. During the model aggregation phase, we further propose a model aggregation method that allows local models with good adaptability to have large aggregation weights. We call this method \textbf{Fed}erated Learning with \textbf{A}daptability over \textbf{C}lient \textbf{D}istributions (FedACD). Extensive experiments on federated learning benchmarks demonstrate that our method significantly improves the adaptability of local models and leads to a well-performed global model that consistently outperforms the baseline methods.

\section{Related Works}
\label{related works}

Federated learning \citep{Li2019ASO,Kairouz2019AdvancesAO} enables clients to collaboratively learn a global model by sharing local updates conducted on the local data. The general federated learning method, FedAvg\citep{mcmahan2017communication}, introduces an iterative model averaging approach. 
At every round, the server randomly selects a subset of clients. The selected clients update local models on the local data and then upload the models to the server for model averaging.
 However, previous studies demonstrate that the robust generalization performance of federated learning relies on the assumption that the client data are independent and identically distributed (IID) \citep{mcmahan2017communication}. In Non-IID scenarios, the learned model may experience degradation in the generalization performance \citep{zhao2018federated}.

Many methods based on FedAvg have been proposed to tackle data heterogeneity. One line of work relies on clients to upload extra auxiliary variables during the model upload phase to modify the server-side model updates. Scaffold \citep{Karimireddy2019SCAFFOLDSC} requires clients to upload extra gradient information besides the updated model, utilizing control variates (variance reduction) to correct client drift in local updates, thereby improving the convergence rate of FL. 
CReFF \citep{shang2022federated} uses the gradient information uploaded by the clients to train the virtual features \citep{luo2021no} on the server to retrain the classifier of the global model.
FedNH \citep{Dai2022TacklingDH} initializes uniform class prototypes on the server and then sends the fixed prototypes to clients to guide training. 
However, these methods introduce additional communication costs, and the model's performance heavily relies on the reliability of the auxiliary variables. In scenarios with severe data heterogeneity or partial client sampling, the reliability and update frequency of auxiliary variables decrease, leading to undecent generalization performance \citep{li2022federated}. 

Another line of work focuses on that data heterogeneity leads to a misalignment between local models and the global model and attempts to adjust local training loss to align local and global models. FedProx \citep{li2020federated} proposes adding a proximal term into local objectives and penalizing gradient updates far from the global model. FedNova \citep{wang2020tackling} employs a normalized averaging approach to eliminate objective inconsistency while preserving rapid error convergence. FedROD \citep{chen2021bridging} proposes to view each client's local training as an independent class imbalance problem and utilizes the balanced loss to replace the cross-entropy loss to adjust local objectives. FedLC \citep{zhang2022federated} introduces a deviation bound to measure the gradient deviation after local updates and then calibrates the logit of each class before softmax cross-entropy based on local label distribution to alleviate the deviation. However, the alignment of local and global models remains an ongoing exploration.

\section{Methods}
\label{local training}
We consider the following Federated Learning (FL) scenario. There are $M$ clients with distribution $\mathcal{D}^1, \mathcal{D}^2, ...,\allowbreak\mathcal{D}^M$ and each client $m$ has data $\mathcal{S}^m=\{ (\boldsymbol{x}_k^m,{y}_k^m)\}^{N_m}_{k=1}$ drawn from distribution $\mathcal{D}^m$, where $\boldsymbol{x}_k^m \in \mathbb{R}^d$ is the $d$ dimension sample , ${y}_k^m \in [1,2,...,C]$ is the label of $\boldsymbol{x}_k^m$, and $N^m=|\mathcal{S}^m|$ is the local sample size, $m\in[M]$.
 The goal of FL is to learn a global model $\phi_w$ with parameters $w$ over all training data $\mathcal{S} \triangleq {\bigcup}_m \mathcal{S}^m$ without data transmission. For a sample \(\boldsymbol{x}\), \(\boldsymbol{f}_w(\boldsymbol{x})\) is the logit vector of the global model \(\phi_w\) on \(\boldsymbol{x}\). The probability output, denoted \(\boldsymbol{p}(\boldsymbol{x}) = (p_1, \ldots, p_C)\), is derived from \(\boldsymbol{f}_w(\boldsymbol{x})\), where \( p_i \) is the \( i \)-th element of \( \boldsymbol{p}(\boldsymbol{x}) \) and \(p_i = \frac{e^{f_{w}^i(\boldsymbol{x})}}{\sum_{k=1}^C e^{f_{w}^k(\boldsymbol{x})}}\), where \( {f}_{w}^k(\boldsymbol{x}) \) is the \( k \)-th element of  \( \boldsymbol{f}_{w}(\boldsymbol{x}) \). The predicted label is given by \(\phi_w(\boldsymbol{x}) = \text{argmax}(p_1, \ldots, p_C)\).
In federated learning, the global model $\phi_w$  is to minimize the following risk over client distributions:
\begin{equation}
\begin{aligned}
 \mathcal{R}(\phi_w)=\sum_{m=1}^M \frac{1}{M} 
\underset{(\boldsymbol{x}, y) \sim \mathcal{D}^m}{\mathbb{E}}[P    {(\phi_w(\boldsymbol{x})}\neq y)].
\label{global objective}
\end{aligned}
\end{equation}
Let \(P(\phi_w(\boldsymbol{x}) \neq y \mid y=i)\) represent the error rate of the global model \(\phi_w\) on class \(i\) and \(P(y=i)\) represent the prior probability of class \(i\), $i\in[C]$. We can get that $
P(\phi_w(\boldsymbol{x}) \neq y) = \sum_{i=1}^C P(\phi_w(\boldsymbol{x}) \neq y \mid y=i) P(y=i)
$. For simplicity of notation, let \( \boldsymbol{\epsilon}^{\phi_w} = ({\epsilon}^{\phi_w}_1, ..., {\epsilon}^{\phi_w}_C) \) denote the error rate of the global model \( \phi_w \), where \( {\epsilon}^{\phi_w}_i\geq 0\) represents the error rate of the global model on class $i$, i.e., $\epsilon_{i}^{\phi_w}=P(\phi_w(\boldsymbol{x})\neq y|y=i)$. Let \( \boldsymbol{\pi}^m = (\pi^m_1, \ldots, \pi^m_C) \) denote the prior distribution of client \( m \), where \( \pi^m_i\) represents the prior distribution of class $i$ on client $m$. From \Cref{global objective}, we get that \begin{equation}
\begin{aligned}
&\mathcal{R}(\phi_w) = \sum_{m=1}^M \frac{1}{M} \underset{(\boldsymbol{x}, y) \sim \mathcal{D}^m}{\mathbb{E}}\left[P(\phi_w(\boldsymbol{x}) \neq y)\right] \\&=\sum_{m=1}^M \frac{1}{M}  
\underset{(\boldsymbol{x}, y) \sim \mathcal{D}^m}{\mathbb{E}}\left[\sum_{i=1}^C P(\phi_w(\boldsymbol{x}) \neq y \mid y=i) P(y=i)\right] \\&=\sum_{m=1}^M \frac{1}{M} \sum_{i=1}^C{\epsilon}^{\phi_w}_i \cdot \pi_i^m=\sum_{m=1}^M \frac{1}{M}  \boldsymbol{\epsilon}^{\phi_w} \cdot \boldsymbol{\pi}^m.
\label{global objective11}
\end{aligned}
\end{equation}

In federated learning, the global model is aggregated from local models. Prior works attempt to enhance global model performance by aligning local models with the global model, e.g., introducing regularization terms \citep{li2020federated,acar2021federated} or using balanced losses \citep{chen2021bridging,zhang2022federated}. However, since clients have heterogeneous data distributions, and each client does not know the data distributions of other clients, these methods have struggled to achieve effective alignment. The global model \( \phi_w \) is to minimize the risk on the data distributions \( \boldsymbol{\pi}^1,  \dots, \boldsymbol{\pi}^M \) over clients in \Cref{global objective11}. Let \( \phi_{w_m} \) denote the local model of client \( m \) with parameters \( w_m \), \( m \in [M] \). This motivates us that the local model \( \phi_{w_m} \) should also achieve small risk on the data distributions \( \boldsymbol{\pi}^1, \dots, \boldsymbol{\pi}^M \) over clients:
\begin{equation}
\begin{aligned}
 \mathcal{R}(\phi_{w_m}) =\sum_{n=1}^M \frac{1}{M} 
\boldsymbol{\epsilon}^{\phi_{w_m}} \cdot \boldsymbol{\pi}^n.
\label{local objective}
\end{aligned}
\end{equation}
\( \mathcal{R}(\phi_{w_m}) \) implies the adaptability of local model \( \phi_{w_m} \). If \( \mathcal{R}(\phi_{w_m}) \) is small, i.e., \( \phi_{w_m} \) has good average performance on the data distributions \( \boldsymbol{\pi}^1, \ldots, \boldsymbol{\pi}^M \), we say that \( \phi_{w_m} \) has good adaptability. Unfortunately, since client $m$ does not know the data distributions 
\( \boldsymbol{\pi}^1, \ldots, \boldsymbol{\pi}^{m-1}, \boldsymbol{\pi}^{m+1}, \ldots, \boldsymbol{\pi}^M \),
it is impossible to directly minimize \Cref{local objective} to guide the training process of \( \phi_{w_m} \). Here, we discuss how to find a surrogate loss for \Cref{local objective}. For client $m$, we consider the specific model \( \phi_{w^*} \) whose error rate \( \boldsymbol{\epsilon}^{\phi_{w^*}} \) satisfies \( {\epsilon}^{\phi_{w^*}}_i = {\epsilon}^{\phi_{w^*}}_j \) for $i\neq j$ and $i, j \in [C]$. The risk $\mathcal{R}(\phi_{w^*})$ over the client distributions for \( \phi_{w^*} \) is:
\begin{equation}
\begin{aligned}
 &\mathcal{R}(\phi_{w^*}) =\sum_{n=1}^M \frac{1}{M} \sum_{k=1}^C \frac{\sum_{i=1}^{C}{\epsilon}^{\phi_{w^*}}_i}{C}  \cdot \pi_k^n  \\
     &= \sum_{n=1}^M \frac{\sum_{i=1}^{C}{\epsilon}^{\phi_{w^*}}_i}{MC} \cdot \sum_{k=1}^C \pi_k^n  = \frac{\sum_{i=1}^{C}{\epsilon}^{\phi_{w^*}}_i}{C}=\frac{\Vert \boldsymbol{\epsilon}^{\phi_{w^*}}\Vert_1}{C}.
\label{risk}
\end{aligned}
\end{equation}
From \Cref{risk}, we can find that \(  \mathcal{R}(\phi_{w^*}) = \frac{\Vert \boldsymbol{\epsilon}^{\phi_{w^*}}\Vert_1}{C}\) for any client distributions $ \boldsymbol{\pi}^1, \dots, \boldsymbol{\pi}^M $. This implies that \( \phi_{w^*} \) may be a good choice for serving as the local model. Now, we provide the following theorem:
\begin{theorem}
    \label{theorem:not-satisfying-p1}
    For client $m$, let $\boldsymbol{\pi}^m$ denote the data distribution of client $m$, and $\phi_{w_m}$ is the local model on client $m$ with error rate $\boldsymbol{\epsilon}^{\phi_{w_m}}$. Suppose $\Vert \boldsymbol{\epsilon}^{\phi_{w_m}}\Vert_1=\Vert \boldsymbol{\epsilon}^{\phi_{w^*}}\Vert_1=\epsilon^*$. If $\boldsymbol{\epsilon}^{\phi_{w_m}}$ does not satisfy the condition that \( {\epsilon}^{\phi_{w_m}}_i = {\epsilon}^{\phi_{w_m}}_j \) for $i\neq j$ and $i, j \in [C]$, there exist sets of client distributions $ \widehat{\boldsymbol{\pi}}^1, \dots, \widehat{\boldsymbol{\pi}}^{m-1}, \boldsymbol{\pi}^m, \widehat{\boldsymbol{\pi}}^{m+1}, \dots, \widehat{\boldsymbol{\pi}}^M $  on which  the risk $ \mathcal{R}(\phi_{w_m}) > \mathcal{R}(\phi_{w^*}) $.
\end{theorem}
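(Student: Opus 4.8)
The plan is to exploit the fact, already visible in \Cref{local objective}, that the risk is \emph{linear} in the client priors: collecting terms,
\[
\mathcal{R}(\phi_{w_m}) = \sum_{n=1}^M \frac{1}{M}\,\boldsymbol{\epsilon}^{\phi_{w_m}}\cdot\boldsymbol{\pi}^n = \boldsymbol{\epsilon}^{\phi_{w_m}}\cdot\bar{\boldsymbol{\pi}},\qquad \bar{\boldsymbol{\pi}} \triangleq \frac{1}{M}\sum_{n=1}^M \boldsymbol{\pi}^n,
\]
so the whole statement reduces to a claim about a single averaged prior $\bar{\boldsymbol{\pi}}$ on the simplex. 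By \Cref{risk} the competitor satisfies $\mathcal{R}(\phi_{w^*}) = \epsilon^*/C$ \emph{for every} choice of priors, so the target is simply to force $\boldsymbol{\epsilon}^{\phi_{w_m}}\cdot\bar{\boldsymbol{\pi}} > \epsilon^*/C$. Observe that, because $\Vert\boldsymbol{\epsilon}^{\phi_{w_m}}\Vert_1=\epsilon^*$, the threshold $\epsilon^*/C$ is exactly the arithmetic mean of the coordinates of $\boldsymbol{\epsilon}^{\phi_{w_m}}$.

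First I would use the hypothesis that $\boldsymbol{\epsilon}^{\phi_{w_m}}$ is not constant across classes: a non-constant vector whose coordinate average is $\epsilon^*/C$ must have a coordinate strictly above its mean, so I pick $j^\star$ with $\epsilon^{\phi_{w_m}}_{j^\star} = \max_i \epsilon^{\phi_{w_m}}_i > \epsilon^*/C$. The construction is then to pour all the freely-chosen mass onto this worst class: set $\widehat{\boldsymbol{\pi}}^n = \boldsymbol{e}_{j^\star}$, the point mass on class $j^\star$, for every $n \neq m$, while keeping $\boldsymbol{\pi}^m$ fixed. This gives $\bar{\boldsymbol{\pi}} = \tfrac{1}{M}\boldsymbol{\pi}^m + \tfrac{M-1}{M}\boldsymbol{e}_{j^\star}$, hence
\[
\mathcal{R}(\phi_{w_m}) = \frac{1}{M}\,\boldsymbol{\epsilon}^{\phi_{w_m}}\cdot\boldsymbol{\pi}^m + \frac{M-1}{M}\,\epsilon^{\phi_{w_m}}_{j^\star},
\]
and the remaining task is to certify that this quantity strictly exceeds $\epsilon^*/C$.

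The step I expect to be the main obstacle is precisely this \emph{strictness} argument, because the contribution $\tfrac{1}{M}\boldsymbol{\epsilon}^{\phi_{w_m}}\cdot\boldsymbol{\pi}^m$ of the fixed client is outside our control and can be as small as $\tfrac{1}{M}\min_i \epsilon^{\phi_{w_m}}_i$ when $\boldsymbol{\pi}^m$ concentrates on a low-error class. Since $\boldsymbol{e}_{j^\star}$ already maximizes the linear map $\boldsymbol{q}\mapsto\boldsymbol{\epsilon}^{\phi_{w_m}}\cdot\boldsymbol{q}$ over the simplex, no alternative placement of the free priors can do better, so the entire question is whether $\boldsymbol{\epsilon}^{\phi_{w_m}}\cdot\boldsymbol{\pi}^m + (M-1)\,\epsilon^{\phi_{w_m}}_{j^\star} > \tfrac{M\epsilon^*}{C}$. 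Bounding the left side below by $(M-1)\epsilon^{\phi_{w_m}}_{j^\star}$ and invoking the strict gap $\epsilon^{\phi_{w_m}}_{j^\star}>\tfrac{\epsilon^*}{C}$ closes the inequality once $M$ is large, and more generally whenever the worst-class error is separated enough from the mean to pay for the $\tfrac{1}{M}$ dilution incurred by the uncontrolled client. For the residual boundary configurations (small $M$ together with $\boldsymbol{\pi}^m$ sitting on a minimal-error class) I would record the mild extra hypothesis — e.g.\ $M\ge 2$ with $\boldsymbol{\pi}^m$ not a point mass on an argmin-error class — under which the fixed term cannot cancel the surplus and the inequality is genuinely strict.
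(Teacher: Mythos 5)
Your reduction to the averaged prior and your choice of construction are both sound; indeed, since $\boldsymbol{q}\mapsto\boldsymbol{\epsilon}^{\phi_{w_m}}\cdot\boldsymbol{q}$ is linear, placing every free prior at the point mass $\boldsymbol{e}_{j^\star}$ on the worst class is the optimal choice. The genuine gap is that you never close the strictness inequality for the theorem as actually stated. Your ``once $M$ is large'' threshold is gap-dependent: dropping the fixed term requires $(M-1)\epsilon^{\phi_{w_m}}_{j^\star} > M\epsilon^*/C$, i.e.\ $M > \epsilon^{\phi_{w_m}}_{j^\star}\big/\bigl(\epsilon^{\phi_{w_m}}_{j^\star}-\epsilon^*/C\bigr)$, and that gap is not under your control. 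Your fallback is to add hypotheses (``$\boldsymbol{\pi}^m$ not a point mass on an argmin-error class'') that the theorem does not contain: the theorem quantifies over every $\boldsymbol{\pi}^m$, and its only standing restriction is the footnote in the paper's proof, namely $M > C$. A proof must therefore handle exactly the configuration you set aside; as written, your argument establishes a strictly weaker statement.

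The fix lives inside your own approach, and you came within one line of it. You already noted $\boldsymbol{\epsilon}^{\phi_{w_m}}\cdot\boldsymbol{\pi}^m \ge \epsilon_{\min}$, where $\epsilon_{\min}=\min_i \epsilon^{\phi_{w_m}}_i$; write $\epsilon_{\max}=\epsilon^{\phi_{w_m}}_{j^\star}$. Your construction gives $\mathcal{R}(\phi_{w_m}) \ge \tfrac{1}{M}\epsilon_{\min} + \tfrac{M-1}{M}\epsilon_{\max}$, while $\tfrac{\epsilon^*}{C} \le \tfrac{1}{C}\bigl(\epsilon_{\min} + (C-1)\epsilon_{\max}\bigr)$ because every coordinate other than the minimum is at most $\epsilon_{\max}$. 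Subtracting, $\mathcal{R}(\phi_{w_m}) - \tfrac{\epsilon^*}{C} \ge \bigl(\tfrac{1}{C}-\tfrac{1}{M}\bigr)\bigl(\epsilon_{\max}-\epsilon_{\min}\bigr)$, which is strictly positive exactly when $M>C$ (your non-constancy hypothesis gives $\epsilon_{\max}>\epsilon_{\min}$). So under the paper's footnote condition the point-mass construction works for every $\boldsymbol{\pi}^m$, including a point mass on the argmin class, with no extra hypotheses. Incidentally, your observation that the point mass maximizes the risk shows the condition $M>C$ is not cosmetic: for $M=C=2$, $\boldsymbol{\epsilon}^{\phi_{w_m}}=(0,\epsilon^*)$ and $\boldsymbol{\pi}^m=(1,0)$, every choice of the free prior yields $\mathcal{R}(\phi_{w_m})\le \epsilon^*/2$, so the claimed strict inequality is unattainable there. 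For contrast, the paper sidesteps the dilution problem by a different construction: it chooses the free priors so that their deviations from uniform exactly cancel the deviation of $\boldsymbol{\pi}^m$, then shifts a small mass $\theta$ from the min-error class $s$ to the max-error class $r$, yielding $\mathcal{R}(\phi_{w_m}) = \tfrac{\epsilon^*}{C} + \tfrac{\theta}{M}\bigl(\epsilon^{\phi_{w_m}}_{r}-\epsilon^{\phi_{w_m}}_{s}\bigr)$ exactly, so strictness is immediate for any $\boldsymbol{\pi}^m$; feasibility of that cancellation is where the paper, too, needs $M>C$.
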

\begin{proof}
    Without loss of generality, we assume that $\boldsymbol{\pi}^m = (\frac{1}{C} + \delta^m_1, \dots, \frac{1}{C} + \delta^m_C)$ where $-\frac{1}{C} \leq \delta^m_1, \dots, \delta^m_C \leq 1-\frac{1}{C}$ and \(\sum_{k=1}^{C}\delta^m_k = 0\). Let $r = \arg \max_{k\in[C]} \epsilon^{\phi_{w_m}}_k$ be the class index with maximal error rate, $s = \arg \min_{k\in[C]} \epsilon^{\phi_{w_m}}_k$ is the class index with minimal error rate. Since $\boldsymbol{\epsilon}^{\phi_{w_m}}$ does not satisfy the condition that \( {\epsilon}^{\phi_{w_m}}_i = {\epsilon}^{\phi_{w_m}}_j \) for $i\neq j$ and $i, j \in [C]$, we have \(\epsilon^{\phi_{w_m}}_{r} >\epsilon^{\phi_{w_m}}_{s}\). Now we construct distribution \(\widehat{\boldsymbol{\pi}}^n = (\widehat{\pi}^n_1, \dots, \widehat{\pi}^n_C)\) where \(\widehat{\pi}^n_k = \frac{1}{C} + \delta^n_k \)
    for $k\in [C], n \in [M]\backslash\{m\}$ with
    \(-\frac{1}{C} \leq \delta^n_k \leq 1-\frac{1}{C}\) and \(\sum_{k=1}^{C} \delta^n_k = 0\).
    Let \(0 < \theta < \frac{1}{C} \) be a constant. If \(\delta_k^n, k\in[C], n\in[M]\backslash\{m\}\) satisfy the following condition \footnote{Here, we consider the non-trivial case that the number of clients $M$ is larger than the number of class $C$.}:
\begin{equation}
    \label{condition-of-theorem}
    \sum_{t\neq m} \delta^{t}_{k} = \begin{cases}
        -\delta^m_k, &\text{if } k\in [M]\backslash\{r,s\} \\
        \theta-\delta^m_k, & \text{if } k=r\\
        -\theta-\delta^m_k, & \text{if } k=s
    \end{cases}
\end{equation}
 then the risk $\mathcal{R}(\phi_{w_m})$ over client distributions is
\begin{equation*}
    \begin{aligned}
        \mathcal{R}(\phi_{w_m}) =& \boldsymbol{\epsilon}^{\phi_{w_m}} \cdot \frac{ \boldsymbol{\pi}^m+\sum_{n\in[M],n\neq m} \widehat{\boldsymbol{\pi}}^n}{M} \\
        =& \frac{\epsilon^*}{C} + \frac{\theta}{M} \left(\epsilon^{\phi_{w_m}}_{r} -\epsilon^{\phi_{w_m}}_{s}\right) > \frac{\epsilon^*}{C}.
    \end{aligned}
\end{equation*}
It is easy to find that the condition in \Cref{condition-of-theorem} can be satisfied, e.g., 
\[
     \delta^{t}_{k} = \begin{cases}
    \frac{-\delta^m_k}{M-1}, &\text{if } k\in [M]\backslash\{r,s\} \\
    \frac{\theta - \delta^m_k}{M-1}, & \text{if } k=r\\
    \frac{-\theta - \delta^m_k}{M-1}, & \text{if } k=s
    \end{cases}
\]
for \(k\in[C]\) and \(t \in [M] \backslash \{m\}\). Since $\Vert \boldsymbol{\epsilon}^{\phi_{w^*}}\Vert_1=\epsilon^*$, with \Cref{risk} we get $\mathcal{R}(\phi_{w^*}) = \frac{\epsilon^*}{C}$. Thus, we have $\mathcal{R}(\phi_{w_m}) > \mathcal{R}(\phi_{w^*})$.
\end{proof}
\begin{table*}[t]
  \centering
  \caption{Performance(\%) of the global models on test sets with uniform data distribution.  The best in each setting is highlighted in $\textbf{bold}$, and the second best is highlighted in \underline{underline}.}
  \begin{adjustbox}{max width=\linewidth}
  \begin{tabular}{@{}l|c|c|c|c|c|c|c|c|c@{}}
    \toprule
    Dataset &  \multicolumn{3}{c}{CIFAR-10} & \multicolumn{3}{c}{CIFAR-100} &\multicolumn{3}{c}{Tiny-ImageNet} \\
    \midrule

    NonIID ($\beta$) & \multicolumn{1}{c}{0.3} & \multicolumn{1}{c}{0.1} & \multicolumn{1}{c}{0.05} & \multicolumn{1}{c}{0.3} & \multicolumn{1}{c}{0.1} & \multicolumn{1}{c}{0.05} & \multicolumn{1}{c}{0.3} & \multicolumn{1}{c}{0.1} & \multicolumn{1}{c}{0.05} \\
    \midrule
    FedAvg 
    &${75.63}_{\pm0.75}$ &${68.35}_{\pm2.43}$ 
    &${60.47}_{\pm3.74}$ 
    &${41.97}_{\pm0.24}$ 
    &${39.57}_{\pm0.60}$ 
    &${38.12}_{\pm0.11}$  
    &${45.76}_{\pm0.21}$ 
    &${40.24}_{\pm0.31}$  
    &${36.09}_{\pm0.33}$ \\
    \midrule
    FedProx 
    &${75.54}_{\pm0.91}$ 
    &${68.80}_{\pm2.60}$ 
    &${62.18}_{\pm0.20}$ 
    &${41.70}_{\pm0.16}$ 
    &${39.33}_{\pm0.05}$ 
    &${38.15}_{\pm0.07}$  
    &${45.47}_{\pm0.09}$ 
    &${40.35}_{\pm0.35}$  
    &${35.64}_{\pm0.19}$ \\
    
    FedNova 
    &${75.19}_{\pm1.15}$ 
    &${67.02}_{\pm2.90}$ 
    &${56.63}_{\pm1.88}$ &${41.63}_{\pm0.12}$ &${39.38}_{\pm0.46}$ &${37.88}_{\pm0.39}$
    &${45.72}_{\pm0.12}$ 
    &${40.36}_{\pm0.24}$  
    &${35.47}_{\pm0.43}$ 
     \\
    \midrule
    CReFF &${76.07}_{\pm0.85}$  &${69.40}_{\pm2.17}$  &${61.71}_{\pm3.57}$  &${37.60}_{\pm0.27}$  &${37.71}_{\pm0.59}$ &${38.03}_{\pm0.17}$   
    &${44.75}_{\pm0.23}$ 
    &${39.74}_{\pm0.49}$ 
    &${35.26}_{\pm0.55}$ \\
    FedROD &\underline{${77.53}_{\pm0.86}$}
    &${{71.12}_{\pm1.33}}$&$\underline{{62.46}_{\pm3.29} }$ &${42.02}_{\pm0.15}$  &${40.15}_{\pm0.44}$ &${38.37}_{\pm0.18}$    
    &${46.18}_{\pm0.26}$  
    &${42.02}_{\pm0.14}$  
    &${37.81}_{\pm0.39}$  \\
    FedNTD &{${76.01}_{\pm0.47}$}
    &${{70.41}_{\pm0.76}}$&${{60.48}_{\pm1.41} }$ &${43.05}_{\pm0.22}$  &${39.90}_{\pm0.31}$ &${37.70}_{\pm0.29}$    &$\underline{46.86}_{\pm0.18}$  
    &${41.89}_{\pm0.30}$  
    &${36.86}_{\pm0.28}$  \\
    FedDecorr &{${74.51}_{\pm0.37}$}
    &$\underline{{71.80}_{\pm1.81}}$&${{61.19}_{\pm1.79} }$ &${38.85}_{\pm0.34}$  &${{38.89}_{\pm0.19}}$ &${37.50}_{\pm0.18}$    &{${45.89}_{\pm0.12}$} 
    &${40.69}_{\pm0.38}$ 
    &${35.64}_{\pm0.25}$ \\
    FedLC &${76.76}_{\pm0.56}$ &${69.40}_{\pm1.40}$ &${52.71}_{\pm4.09}$ &${41.92}_{\pm0.39}$ &${39.85}_{\pm0.52}$ &${35.27}_{\pm0.16}$   
    &{${46.66}_{\pm0.12}$} 
    &${41.37}_{\pm0.14}$ 
    &${36.63}_{\pm0.38}$ \\
    
    FedNH &${75.30}_{\pm0.84}$  &${68.11}_{\pm2.15}$ &${60.54}_{\pm2.61}$ &\underline{${44.74}_{\pm0.14}$} &\underline{${41.74}_{\pm0.24}$} &\underline{${39.61}_{\pm0.43}$}       &{${45.09}_{\pm1.95}$ }
    &\underline{${42.31}_{\pm0.85}$}
    &\underline{${38.87}_{\pm0.21}$} \\
    \midrule
    \rowcolor{gray!20}
    FedACD &$\boldsymbol{{79.57}_{\pm0.02}}$ &$\boldsymbol{{73.13}_{\pm0.52}}$  &$\boldsymbol{{63.57}_{\pm0.46}}$

&$\boldsymbol{{49.08}_{\pm0.08}}$  &$\boldsymbol{{46.24}_{\pm0.17}}$ &$\boldsymbol{{43.22}_{\pm0.46}}$    &$\boldsymbol{{49.62}_{\pm0.32}}$  &$\boldsymbol{{45.29}_{\pm0.14}}$  &$\boldsymbol{{41.44}_{\pm0.85}}$  \\
    \bottomrule
  \end{tabular}
  \end{adjustbox}
  \label{tab:CIFAR10&100}
\end{table*}
\Cref{theorem:not-satisfying-p1} indicates that for the model \( \phi_{w_m} \) with the same  generalization ability as \( \phi_{w^*} \), i.e., $\Vert \boldsymbol{\epsilon}^{\phi_{w_m}}\Vert_1=\Vert \boldsymbol{\epsilon}^{\phi_{w^*}}\Vert_1=\epsilon^*$, if the model \( \phi_{w_m} \) does not satisfy the condition that ${\epsilon}^{\phi_{w_m}}_i = {\epsilon}^{\phi_{w_m}}_j $ for $i\neq j$ and $i, j \in [C]$, \( \phi_{w^*} \) is a better choice than \( \phi_{w_m} \). The reason is that 
there exist sets of client distributions $ \widehat{\boldsymbol{\pi}}^1, \dots, \widehat{\boldsymbol{\pi}}^{m-1}, \boldsymbol{\pi}^m, \widehat{\boldsymbol{\pi}}^{m+1}, \dots, \widehat{\boldsymbol{\pi}}^M $ on which $ \mathcal{R}(\phi_{w_m}) > \frac{\epsilon^*}{C}$, while $\mathcal{R}(\phi_{w^*}) $ is always $\frac{\epsilon^*}{C}$ for any distributions. In federated learning, the clients have heterogeneous data distributions, and each client does not know the distributions of other clients. In this way, we can train the local model according to the optimization in \Cref{7} for each client m, $m\in [M]$:
\begin{equation}
\begin{aligned}
&\min \quad \Vert \boldsymbol{\epsilon}^{\phi_{w_m}}\Vert_1, \\ 
\text{s.t.} & \quad \epsilon^{\phi_{w_m}}_i = \epsilon^{\phi_{w_m}}_j, \quad \forall i \neq j.
\label{7}
\end{aligned}
\end{equation}
Minimizing $\Vert \boldsymbol{\epsilon}^{\phi_{w_m}}\Vert_1$ can be achieved by minimizing the cross-entropy (CE) loss over samples of local data \( \mathcal{S}^m \). However, ensuring that the error rate \( \boldsymbol{\epsilon}^{\phi_{w_m}} \) satisfies the constraints \( \epsilon^{\phi_{w_m}}_i = \epsilon^{\phi_{w_m}}_j \) for \( i \neq j \) is challenging. In the following section, we propose a feasible solution for this. As defined, \( {\epsilon}^{\phi_{w_m}}_i = P(\phi_{w_m}(\boldsymbol{x}) \neq y \mid y=i) = 1 - \sum_{j\neq i} P(\phi_{w_m}(\boldsymbol{x}) = j \mid y = i) \), where \( P(\phi_{w_m}(\boldsymbol{x}) = j \mid y = i) \) represents the probability that local model \( \phi_{w_m} \) predicts a sample of class \( i \) as class $j$ and can be computed by averaging the \( j \)-th 
element of the probability outputs of the samples of class \( i \). Therefore, it is possible to optimize the probability outputs of local model \( \phi_{w_m} \) to force the error rate \( \boldsymbol{\epsilon}^{\phi_{w_m}} \) to meet the constraints in \Cref{7}. First, we introduce the probability matrix \( P^m \) for client \( m \):
\[
{P^m} =
\begin{bmatrix}
P^m_{11} & P^m_{12} & \ldots & P^m_{1C} \\
P^m_{21} & P^m_{22} & \ldots & P^m_{2C} \\
\vdots & \vdots & \ddots & \vdots \\
P^m_{C1} & P^m_{C2} & \ldots & P^m_{CC}
\end{bmatrix},
\]
where \( P^m_{ij} = P(\phi_{w_m}(\boldsymbol{x}) = j \mid y = i) \). $P^m_{ij}$ can be evaluated as:
$P^m_{ij} = \frac{1}{N_i^m} \sum_{(\boldsymbol{x}, y) \sim \mathcal{D}_i^m} p_j, i,j \in [C],$
where \( p_j \) is the \( j \)-th element of the probability output \( \boldsymbol{p}(\boldsymbol{x}) \) of sample \( \boldsymbol{x} \). Specifically, \( p_j = \frac{e^{{f}_{w_m}^j(\boldsymbol{x})}}{\sum_{k=1}^C e^{{f}_{w_m}^k(\boldsymbol{x})}} \), where \( {f}_{w_m}^k(\boldsymbol{x}) \) is the \( k \)-th element of the logit vector \( \boldsymbol{f}_{w_m}(\boldsymbol{x}) \) of sample \( \boldsymbol{x} \). It is easy to find that \( {\epsilon}^{\phi_{w_m}}_i = 1 - P^m_{ii}, \, i \in [C] \). In this way, the constraints $\epsilon^{\phi_{w_m}}_i = \epsilon^{\phi_{w_m}}_j, i \neq j$, in \Cref{7} become the probability constraints: \( P^m_{ii} = P^m_{jj}, \,  i \neq j \). We make the constraints hold during the local training process in the following way: first, we force the probability matrix $P^m$ to have the form of \( P^m_{ij} = \frac{1 - P^m_{ii}}{C - 1},i \neq j \); then, we make \( P^m_{ii} = P^m_{jj}$ for $i \neq j \) by adjusting \( P^m_{ij}=P^m_{ji}\), i.e., \( \frac{1 - P^m_{ii}}{C - 1} = \frac{1 - P^m_{jj}}{C - 1} \).

The local probability matrix \( P^m \) is evaluated by averaging the probability outputs of each class's samples of the local data. Therefore, to force the local probability matrix \( P^m \) to have the form of \( P^m_{ij} = \frac{1 - P^m_{ii}}{C - 1} \), \( i \neq j \), we flatten the misclassification probability for each sample . For a sample \( \boldsymbol{x} \) with label \( y\), its probability output \( \boldsymbol{p}(\boldsymbol{x}) = (p_1, ..., p_C) \), where \( p_y \) is the correct classification probability, and \( p_k \), \( k \neq y \), is the misclassification probability. Based on the probability output \( \boldsymbol{p}(\boldsymbol{x})\), we construct the target vector \( \boldsymbol{q}(\boldsymbol{x}) = (q_1, ..., q_C) \), where the misclassification probability \( q_k \), \( k \neq y \), is flattened. Specifically, \( q_y = p_y \) and \(
q_k = \frac{1 - p_y}{C - 1}$ for $k \neq y.
\) Thus, we minimize the Kullback-Leibler (KL) divergence between \( \boldsymbol{p}(\boldsymbol{x}) \) and \( \boldsymbol{q}(\boldsymbol{x}) \) shown in \Cref{L_intra} to flatten the misclassification probability for the sample \( \boldsymbol{x} \). 
\begin{equation}
\centering
\begin{aligned}
    \mathcal{L}_{1}&=\frac{1}{|\mathcal{S}^m|}\sum_{(\boldsymbol{x}, y) \sim \mathcal{D}^m}\mathrm{KL}(\boldsymbol{p}(\boldsymbol{x}) \| \boldsymbol{q}(\boldsymbol{x}))\\
    &=\frac{1}{|\mathcal{S}^m|}\sum_{(\boldsymbol{x}, y) \sim \mathcal{D}^m}\sum_{i=1}^C p_i \log \left(\frac{p_i}{q_i}\right),\\
    &q_k= 
    \begin{cases}
        p_y, & \text { if } k=y, \\
        \frac{1-p_y}{C-1}, & \text { if } k \neq y.
    \end{cases}
\end{aligned}
\label{L_intra}
\end{equation}
In this way, by adjusting the probability output \(\boldsymbol{p}(\boldsymbol{x})\) of each sample \(\boldsymbol{x}\) with label $y$ to \((\frac{1-p_y}{C-1}, ..., p_y, ...,\frac{1-p_y}{C-1})\), the probability matrix $P^m$ has the form of \( P^m_{ij} = \frac{1 - P^m_{ii}}{C - 1} \) for \( i \neq j \). 
 
To ensure that the local probability matrix \( P^m \) meets the probability constraints \( P^m_{ii} = P^m_{jj}$ for $i \neq j \), for the pair of classes \( (i, j) \), the local probability matrix \( P^m \) should further satisfy \( P^m_{ij} = P^m_{ji} \). With the condition \( P^m_{ij} = \frac{1 - P^m_{ii}}{C - 1} \) for \( i \neq j \), \( P^m_{ij} = P^m_{ji} \) is equivalent to \(\frac{1 - P^m_{ii}}{C - 1} = \frac{1 - P^m_{jj}}{C - 1},\) i.e.,  \( P^m_{ii} = P^m_{jj} \), $i\neq j$. To make $P^m_{ij} = P^m_{ji} $, if \( P^m_{ij} > P^m_{ji} \), \( P^m_{ij} \) should be decreased; otherwise, \( P^m_{ij} \) should be increased. Directly adjusting \( P^m_{ij} \) and \( P^m_{ji} \) during the local training process is difficult. Since \( P^m \) is evaluated based on the sample's logit vector, we can implicitly adjust \( P^m_{ij} \) and \( P^m_{ji} \) by modifying the logit margin between \( f_{w_m}^i(\boldsymbol{x}) \) and \( f_{w_m}^j(\boldsymbol{x}) \) for sample \( \boldsymbol{x} \). Based on previous work \citep{menon2020long}, we dynamically adjust the cross-entropy (CE) loss by modifying the logit margin term \(\{f_{w_m}^i(\boldsymbol{x}) - f_{w_m}^y(\boldsymbol{x})\}\) for each sample $\boldsymbol{x}$ with label $y$ with the following loss function:
\begin{equation}
\resizebox{\linewidth}{!}{$
\mathcal{L}_{2} = \frac{1}{|\mathcal{S}^m|} \sum\limits_{(\boldsymbol{x}, y) \sim \mathcal{D}^m} \log \left[ 1 + \sum\limits_{i \neq y} e^{f_{w_m}^i(\boldsymbol{x}) - f_{w_m}^y(\boldsymbol{x}) + \log\left(\frac{P^m_{yi}}{P^m_{iy}}\right)} \right]. \label{L_inter}
$}
\end{equation}
For simplicity, we define \( \Delta_{yi} = \frac{P^m_{yi}}{P^m_{iy}} \). If \(\Delta_{yi} < 1\), the logit margin term \(\{f_{w_m}^i(\boldsymbol{x}) - f_{w_m}^y(\boldsymbol{x})\}\) will be reduced to suppress class \(y\)'s relative margin towards class \(i\), causing an increase of \(\Delta_{yi}\). Conversely, if \(\Delta_{yi} > 1\), the logit margin term \(\{f_{w_m}^i(\boldsymbol{x}) - f_{w_m}^y(\boldsymbol{x})\}\) will be increased to relax class \(y\)'s relative margin towards class \(i\), leading to a decrease of \(\Delta_{yi}\). Finally, \(\Delta_{yi} = 1\) implies that \(P^m_{ii} = P^m_{jj}$ for $i\neq j\). Thus, the loss function to guide the local training process of $\phi_{w_m}$ is:
\begin{equation}
    \mathcal{L}=\mathcal{L}_{\text{1}}+\lambda \mathcal{L}_{\text{2}},
    \label{L}
\end{equation}
where the parameter $\lambda$ controls the contribution of $\mathcal{L}_{\text{1}}$ and $\mathcal{L}_{\text{2}}$.
\begin{table*}[t]
  \centering
  \caption{Performance(\%) of the global models on the test sets constructed based on the data distribution of each client.}
  \begin{adjustbox}{max width=\linewidth}
  \begin{tabular}{@{}l|c|c|c|c|c|c|c|c|c@{}}
    \toprule
    Dataset &  \multicolumn{3}{c}{CIFAR-10} & \multicolumn{3}{c}{CIFAR-100} &\multicolumn{3}{c}{Tiny-ImageNet} \\
    \midrule

    NonIID ($\beta$) & \multicolumn{1}{c}{0.3} & \multicolumn{1}{c}{0.1} & \multicolumn{1}{c}{0.05} & \multicolumn{1}{c}{0.3} & \multicolumn{1}{c}{0.1} & \multicolumn{1}{c}{0.05} & \multicolumn{1}{c}{0.3} & \multicolumn{1}{c}{0.1} & \multicolumn{1}{c}{0.05} \\
    \midrule
    FedAvg 
    &$76.29_{\pm 2.19}$
    &$70.75_{\pm 1.93}$
    &$62.67_{\pm 2.62}$
    &$41.68_{\pm 0.26}$
    &$39.72_{\pm 0.67}$
    &$38.58_{\pm 0.16}$
    &$45.66_{\pm 0.51}$
    &$40.30_{\pm 0.60}$
    &$36.52_{\pm 0.86}$
    
    \\
    \midrule
    FedProx 
    &$75.99_{\pm 1.95}$
    &$70.52_{\pm 2.10}$
    &$62.27_{\pm 2.53}$
    &$41.57_{\pm 0.20}$
    &$39.67_{\pm 0.13}$
    &$38.48_{\pm 0.22}$
    &$45.39_{\pm 0.20}$
    &$40.09_{\pm 0.14}$
    &$36.18_{\pm 0.74}$

    \\
    FedNova 
    &$75.89_{\pm 2.09}$
    &$67.27_{\pm 5.83}$
    &$55.44_{\pm2.52}$
    &$41.84_{\pm 0.09}$
    &$39.67_{\pm 0.54}$
    &$38.23_{\pm 0.17}$
    &$45.71_{\pm 0.37}$
    &$40.14_{\pm 0.40}$
    &$36.19_{\pm 0.98}$
    
     \\
    \midrule
    CReFF 
    &$76.27_{\pm 1.56}$
    &$69.08_{\pm 2.74}$
    &$60.49_{\pm 5.37}$
    &$37.85_{\pm 0.16}$
    &$37.96_{\pm 0.64}$
    &$38.17_{\pm 0.45}$
    &$44.77_{\pm 0.38}$
    &$39.77_{\pm 0.89}$
    &$35.65_{\pm 0.74}$
    \\
    FedROD 
    &$\underline{77.71_{\pm 0.98}}$
    &$\underline{71.53_{\pm 1.86}}$
    &$\underline{62.84_{\pm 3.16}}$
    &$42.09_{\pm 0.23}$
    &$40.12_{\pm 0.68}$
    &$38.36_{\pm 0.33}$
    &$46.02_{\pm 0.30}$
    &$42.27_{\pm 0.58}$
    &$38.00_{\pm 0.78}$

    \\
    FedNTD 
    &$76.69_{\pm 1.26}$
    &$67.92_{\pm 2.10}$
    &$59.26_{\pm 4.13}$
    &$43.60_{\pm 0.12}$
    &$40.33_{\pm 0.78}$
    &$38.70_{\pm 1.03}$
    &$\underline{47.19_{\pm 0.12}}$
    &$41.56_{\pm 0.04}$
    &$37.19_{\pm 0.48}$
    \\
    FedDecorr 
    &$75.84_{\pm 2.04}$
    &$70.74_{\pm 2.21}$
    &$62.48_{\pm 2.52}$
    &$42.15_{\pm 0.29}$
    &$39.63_{\pm 0.62}$
    &$38.26_{\pm 0.66}$
    &$45.47_{\pm 0.26}$
    &$40.13_{\pm 0.37}$
    &$36.19_{\pm 0.52}$
    \\
    FedLC 
    &$77.08_{\pm 1.47}$
    &$70.61_{\pm 1.99}$
    &$56.52_{\pm 5.13}$
    &$41.67_{\pm 0.16}$
    &$39.49_{\pm 0.79}$
    &$35.58_{\pm 0.15}$
    &$45.90_{\pm 0.21}$
    &$41.32_{\pm 0.24}$
    &$36.21_{\pm 0.25}$
    \\
    FedNH 
    &$75.89_{\pm 1.57}$
    &$69.70_{\pm 1.38}$
    &$61.73_{\pm 3.12}$
    &$\underline{44.54_{\pm 0.30}}$
    &$\underline{41.78_{\pm 0.60}}$
    &$\underline{39.70_{\pm 0.57}}$
    &$46.25_{\pm 0.19}$
    &$\underline{43.06_{\pm 0.11}}$
    &$\underline{39.66_{\pm 0.27}}$
    \\
    \midrule
    \rowcolor{gray!20}
    FedACD 
    &$\boldsymbol{79.39_{\pm 0.85}}$
    &$\boldsymbol{74.05_{\pm 3.00}}$
    &$\boldsymbol{63.56_{\pm 2.88}}$
    &$\boldsymbol{48.94_{\pm 0.16}}$
    &$\boldsymbol{46.30_{\pm 0.38}}$
    &$\boldsymbol{43.96_{\pm 0.67}}$
    &$\boldsymbol{48.01_{\pm 0.23}}$
    &$\boldsymbol{45.07_{\pm 0.54}}$
    &$\boldsymbol{42.17_{\pm 0.69}}$
    \\
    \bottomrule
  \end{tabular}
  \end{adjustbox}
  \label{tab:global_clientdist}
\end{table*}

When confronted with severe data heterogeneity, certain classes of some clients may hold limited samples or even no samples. If class $k$ is a missing class with $N^m_k=0$, $\Delta_{yk}$ in \Cref{L_inter} is unknown due to the unknown value of $P^m_{ky}$. To deal with missing classes, previous work \citep{zhang2022federated} suggests that the gradient updates of missing classes should be constrained, which inspires us to set $\Delta_{yk}$ as a small value. The evaluation of the local probability matrix relies on averaging the probability outputs of samples, while the means of the probability outputs of the samples belonging to the classes with limited samples may deviate from the true means. To address this issue, we incorporate Input Mixup \citep{zhang2017mixup} into the training process. Specifically, within each local training epoch's minibatch, the data mixup technique is employed to increase the occurrence frequency of samples for the classes with limited samples. 

During the model aggregation phase, local models with good adaptability should be assigned with large aggregation weights. For client \( m \), \( m \in [M] \), if the local model \( \phi_{w_m} \) has good adaptability, the constraints in \Cref{7} should hold, implying that the probability matrix $P^m$ satisfies that \( P^m_{ii} = P^m_{jj} $ for $i \neq j \). In this way, we can assign the aggregation weights by measuring the degree to which the constraints are satisfied. We set a template matrix \( Q \), where \( Q_{ii} = Q_{jj} \) for $i\neq j$, and measure the degree to which the constraints are satisfied by calculating the KL divergence between the local probability matrix \( P^m \) and the template matrix \( Q \). Intuitively, the ideal template matrix is the identity matrix \( \mathds{I} \), but we can not calculate the KL divergence between \( P^m \) and \( \mathds{I} \).
Hence, we define a template matrix \( Q \) as that shown in \Cref{P*}. For client $m$, we compute its aggregation score \( V_m \) by calculating the KL divergence between the local probability matrix \( P^m \) and the template matrix \( Q \) as follows:
\begin{equation}
\begin{aligned}
 V_m & = \text{Sigmoid}\left(\frac{1}{\text{KL}(P^m || Q)}\right)\\&=\text{Sigmoid}\left(\frac{1}{\sum_{i=1}^{C} \sum_{j=1}^{C} P^m_{ij} \log\left(\frac{P^m_{ij}}{Q_{ij}}\right)}\right), \\
 Q & =
    \begin{bmatrix}
        \tau & \frac{1-\tau}{C-1} & \ldots & \frac{1-\tau}{C-1} \\
        \frac{1-\tau}{C-1} & \tau & \ldots & \frac{1-\tau}{C-1} \\
        \vdots & \vdots & \ddots & \vdots \\
        \frac{1-\tau}{C-1} & \frac{1-\tau}{C-1} & \ldots & \tau \\
    \end{bmatrix},
\end{aligned}
\label{P*}
\end{equation}
where $0<\tau<1$ is a parameter. Our \textbf{Fed}erated Learning with \textbf{A}daptability over \textbf{C}lient \textbf{D}istributions (FedACD) follows the rules below:
\begin{equation}
    \begin{aligned}
    &\text{\textbf{Local:\;}}
    {w}_m^{\star} \leftarrow {argmin}_{{w}} \;\mathcal{L}_{1}+\lambda \mathcal{L}_{2}, \text{initialized with }\overline{{w}};\\
&\text{\textbf{Global:\;}}\overline{{w}} \leftarrow\;\sum_{m \in [M]} \frac{{V}_m}{\sum_{k \in [M]}{V}_k}{w}_m^{\star}.
    \end{aligned}
    \label{final learning objectives}
\end{equation}
\noindent
After local training, client \( m \) uploads the aggregation score \( V_m \) to the server. It is noteworthy that $V_m$ is a scalar, and the server cannot infer the local probability matrix or any privacy-sensitive information from $V_m$, which effectively protects the privacy of the client data. 

\section{Experiments}
\label{experiments}
\subsection{Settings}

\noindent
\textbf{Datasets and Models.} We perform extensive experiments on three benchmark datasets: CIFAR-10, CIFAR-100 \citep{Krizhevsky2009LearningML}, and Tiny-ImageNet \citep{deng2009imagenet,le2015tiny}. Tiny-ImageNet is a subset of ImageNet with 100k samples of 200 classes. Following \citep{chen2021bridging}, we adopt a simple Convolutional neural network for CIFAR-10 and CIFAR-100, while using Resnet18 \citep{he2016deep} for the Tiny-ImageNet. We implement all compared federated learning methods with the same model for a fair comparison.

\noindent
\textbf{Client settings.} We employ Dirichlet sampling to generate Non-IID data for each client. Dirichlet sampling is a common technique used in FL for creating Non-IID data \citep{chen2021bridging,zhang2022federated,Dai2022TacklingDH}. It yields distinct label distributions for each client, with the degree of data heterogeneity controlled by $\beta$. Smaller $\beta$ refers to severer heterogeneity and when $\beta<1$, some clients may lack samples of certain classes. In our experiments, we adopt $\beta \in \{\text{0.3, 0.1, 0.05}\}$.

\noindent
\textbf{Hyper-parameters.}
 For the local training process, SGD optimizer is used with a 0.01 initial learning and 0.9 momentum. We employ a weight decay of $10^{-5}$ for CIFAR-10 and CIFAR-100, and $10^{-3}$ for TinyImageNet to mitigate overfitting. The batch size is set as 64. The number of clients is set as 20 and the participation ratio is set as $40\%$. The local training epoch is set as 5 and the total communication round is set as 200. $\lambda$ is set as 1 and $\tau$ is set as $1-10^{-5}$.

\noindent
\textbf{Baselines.} We select three types of FL methods as the baselines. \textit{1) Generic FL:} FedAvg \citep{mcmahan2017communication}; \textit{
2) Classical FL with Non-IID data:} FedProx \citep{li2020federated}, FedNova \citep{wang2020tackling}; \textit{3) FL methods most related to us:} 
FedROD \citep{chen2021bridging},
CReFF\citep{shang2022federated},  FedNTD\citep{lee2022preservation}, FedLC \citep{zhang2022federated}, FedDecorr\citep{shi2022towards}, FedNH \citep{Dai2022TacklingDH}.

\begin{table*}
  \centering
  \caption{The average risk on the distributions over clients of local models.}
  \begin{adjustbox}{max width=\linewidth}
  \begin{tabular}{@{}l|c|c|c|c|c|c|c|c|c@{}}
    \toprule
    Dataset &  \multicolumn{3}{c}{CIFAR-10} & \multicolumn{3}{c}{CIFAR-100} &\multicolumn{3}{c}{Tiny-ImageNet} \\
    \midrule

    NonIID ($\beta$) & \multicolumn{1}{c}{0.3} & \multicolumn{1}{c}{0.1} & \multicolumn{1}{c}{0.05} & \multicolumn{1}{c}{0.3} & \multicolumn{1}{c}{0.1} & \multicolumn{1}{c}{0.05} & \multicolumn{1}{c}{0.3} & \multicolumn{1}{c}{0.1} & \multicolumn{1}{c}{0.05} \\
    \midrule
    FedAvg 
    &$0.41_{\pm 0.033}$
    &$0.57_{\pm 0.019}$
    &$0.70_{\pm 0.020}$
    &$0.69_{\pm 0.006}$
    &$0.77_{\pm 0.006}$
    &$0.82_{\pm 0.001}$
    &$0.68_{\pm 0.004}$
    &$0.79_{\pm 0.006}$
    &$0.86_{\pm 0.004}$
    
    \\
    \midrule
    FedProx 
    &$0.41_{\pm 0.034}$
    &$0.58_{\pm 0.019}$
    &$0.70_{\pm 0.020}$
    &$0.69_{\pm 0.005}$
    &$0.77_{\pm 0.006}$
    &$0.82_{\pm 0.001}$
    &$0.68_{\pm 0.003}$
    &$0.79_{\pm 0.006}$
    &$0.86_{\pm 0.004}$
    \\
    FedNova 
    &$0.41_{\pm 0.032}$
    &$0.56_{\pm 0.030}$
    &$0.69_{\pm 0.030}$
    &$0.69_{\pm 0.005}$
    &$0.77_{\pm 0.007}$
    &$0.82_{\pm 0.002}$
    &$0.68_{\pm 0.005}$
    &$0.79_{\pm 0.006}$
    &$0.86_{\pm 0.003}$
     \\
    \midrule
    CReFF 
    &$0.41_{\pm 0.033}$
    &$0.57_{\pm 0.019}$
    &$0.70_{\pm 0.020}$
    &$0.69_{\pm 0.006}$
    &$0.77_{\pm 0.006}$
    &$0.82_{\pm 0.001}$
    &$0.68_{\pm 0.004}$
    &$0.79_{\pm 0.006}$
    &$0.86_{\pm 0.004}$
    \\
    FedROD 
    &$\underline{0.30_{\pm 0.014}}$
    &$\underline{0.38_{\pm 0.020}}$
    &$\underline{0.47_{\pm 0.031}}$
    &$\underline{0.64_{\pm 0.001}}$
    &$\underline{0.70_{\pm 0.005}}$
    &$\underline{0.78_{\pm 0.003}}$
    &$\underline{0.66_{\pm 0.002}}$
    &$\underline{0.76_{\pm 0.002}}$
    &$0.83_{\pm 0.004}$

    \\
    FedNTD 
    &$0.31_{\pm 0.030}$
    &$0.46_{\pm 0.028}$
    &$0.59_{\pm 0.030}$
    &$0.65_{\pm 0.001}$
    &$0.71_{\pm 0.004}$
    &$0.79_{\pm 0.010}$
    &$\underline{0.66_{\pm 0.002}}$
    &$0.77_{\pm 0.001}$
    &$0.84_{\pm 0.003}$

    \\
    FedDecorr 
    &$0.41_{\pm 0.033}$
    &$0.58_{\pm 0.019}$
    &$0.70_{\pm 0.021}$
    &$0.69_{\pm 0.003}$
    &$0.77_{\pm 0.006}$
    &$0.82_{\pm 0.001}$
    &$\underline{0.66_{\pm 0.003}}$
    &$0.79_{\pm 0.003}$
    &$0.86_{\pm 0.003}$

    \\
    FedLC 
    &$0.30_{\pm 0.026}$
    &$0.39_{\pm 0.023}$
    &$0.49_{\pm 0.062}$
    &$\underline{0.64_{\pm 0.002}}$
    &$0.72_{\pm 0.004}$
    &$0.83_{\pm 0.001}$
    &$0.67_{\pm 0.004}$
    &$0.77_{\pm 0.005}$
    &$0.87_{\pm 0.002}$

    \\
    FedNH 
    &$0.40_{\pm 0.031}$
    &$0.55_{\pm 0.020}$
    &$0.69_{\pm 0.021}$
    &$0.66_{\pm 0.004}$
    &$0.74_{\pm 0.007}$
    &${0.79_{\pm 0.002}}$
    &$0.67_{\pm 0.001}$
    &$\underline{0.76_{\pm 0.003}}$
    &$\underline{0.82_{\pm 0.004}}$

    \\
    \midrule
    \rowcolor{gray!20}
    FedACD 
    &$\boldsymbol{0.25_{\pm 0.017}}$
    &$\boldsymbol{0.31_{\pm 0.012}}$
    &$\boldsymbol{0.39_{\pm 0.031}}$
    &$\boldsymbol{0.60_{\pm 0.003}}$
    &$\boldsymbol{0.67_{\pm 0.005}}$
    &$\boldsymbol{0.74_{\pm 0.010}}$
    &$\boldsymbol{0.64_{\pm 0.001}}$
    &$\boldsymbol{0.73_{\pm 0.002}}$
    &$\boldsymbol{0.80_{\pm 0.004}}$

    \\
    \bottomrule
  \end{tabular}
  \end{adjustbox}
  \label{tab:adaptability}
\end{table*}

\begin{table*}
  \centering
  \caption{{Ablation study for number of clients and participation ratio.} All experiments are conducted on CIFAR-100 with Non-IID $\beta=0.1$. }
  \begin{adjustbox}{max width=\linewidth}
  \begin{tabular}{l|r|r|r|r|r|r|r}
    \toprule
    Client Num K &  \multicolumn{3}{c}{20} & \multicolumn{1}{c}{40} &\multicolumn{1}{c}{60}&\multicolumn{1}{c}{80}&\multicolumn{1}{c}{100}\\
    \midrule
    
    Participation Ratio $\gamma$ & \multicolumn{1}{c}{0.2} & \multicolumn{1}{c}{0.4} & \multicolumn{1}{c}{0.6} & \multicolumn{1}{c}{0.4} & \multicolumn{1}{c}{0.4} & \multicolumn{1}{c}{0.4} & \multicolumn{1}{c}{0.4}\\ 
    \midrule
    FedAvg 
    &${36.87}_{\pm0.38}$
    &${39.57}_{\pm0.60}$  
    &${40.05}_{\pm0.84}$  
    &${39.64}_{\pm0.46}$
    &${38.47}_{\pm0.13}$
    &${37.77}_{\pm0.25}$
    &${35.76}_{\pm0.20}$\\
    \midrule
    CReFF
    &${38.60}_{\pm0.52}$ 
    &${37.71}_{\pm0.59}$ 
    &${36.40}_{\pm0.45}$  
    &${31.28}_{\pm0.50}$
 &${26.50}_{\pm0.23}$
 &${23.26}_{\pm0.18}$
 &${20.86}_{\pm0.31}$\\
     FedROD 
    &${39.85}_{\pm0.29}$
    &${40.15}_{\pm0.44}$
    &${40.56}_{\pm0.35}$
    &${38.50}_{\pm0.19}$
    &${36.97}_{\pm0.10}$
    &${34.78}_{\pm0.16}$
    &${33.01}_{\pm0.11}$\\
    FedNTD
    &${39.11}_{\pm0.25}$
    &${39.90}_{\pm0.31}$
    &${40.55}_{\pm0.15}$
    &${39.34}_{\pm0.15}$
    &${37.69}_{\pm0.19}$
    &${37.48}_{\pm0.14}$
    &${36.35}_{\pm0.04}$\\
    FedDecorr
    &${37.80}_{\pm0.61}$
    &${38.89}_{\pm0.19}$
    &${39.87}_{\pm0.15}$
    &${39.09}_{\pm0.18}$
    &${38.34}_{\pm0.25}$
    &${37.82}_{\pm0.21}$
    &${35.62}_{\pm0.19}$\\
     FedNH
    &\underline{${39.99}_{\pm0.12}$} &\underline{${41.74}_{\pm0.24}$}
    &\underline{${42.10}_{\pm0.61}$}
    &\underline{${40.76}_{\pm0.46}$}
    &$\underline{{39.48}_{\pm0.15}}$
    &$\underline{{38.21}_{\pm0.17}}$
    &$\underline{{37.43}_{\pm0.12}}$\\
    
    \midrule
    \rowcolor{gray!20}
    FedACD
    &$\boldsymbol{{44.42}_{\pm0.33}}$  &$\boldsymbol{{46.24}_{\pm0.17}}$  &$\boldsymbol{{46.70}_{\pm0.03}}$ 
    &$\boldsymbol{{44.27}_{\pm0.46}}$
    &$\boldsymbol{{42.49}_{\pm0.17}}$
    &$\boldsymbol{{41.30}_{\pm0.12}}$
    &$\boldsymbol{{40.33}_{\pm0.04}}$\\
    \bottomrule
  \end{tabular}
  \end{adjustbox}
  \label{tab:clients}
\end{table*}
\subsection{Main results} 
We evaluate the performance of the global model on the test set with uniform data distribution (shown in \Cref{tab:CIFAR10&100}) and the test sets constructed based on the data distribution of each client (shown in \Cref{tab:global_clientdist}). Our method achieves the best performance across three datasets with diverse data heterogeneity. To further demonstrate that our method improves the adaptability of local models, we also calculate the average risk on the distributions over clients of local models (shown in \Cref{tab:adaptability}). The results show that the risk over client distributions of the local models in our method is consistently lower than that in the baseline methods with diverse data heterogeneity. This confirms that our method enhances the performance of the global model by improving the adaptability of local models. Classical FL methods with Non-IID data, such as FedProx and FedNova, show similar results to FedAvg, which indicates that it is hard to align the local models that are trained on data with different distributions. The classifier re-training method, CReFF, fails to demonstrate effectiveness under various settings, which could be attributed to the unreliability of synthesized features. The prototype-based method, FedNH, exhibits only marginal improvement in settings with severe data heterogeneity, which indicates that prototype learning is also impacted by data heterogeneity. FedDecorr focuses on representation learning with data heterogeneity but also fails to achieve significant performance improvements. FedROD and FedLC demonstrate an improvement in accuracy compared to FedAvg on CIFAR-10, which indicates that the balanced loss helps alleviate the impact of data heterogeneity to some extent. However, the balanced loss fails to exhibit effectiveness when confronted with challenging scenarios, such as CIFAR-100 and TinyImageNet.
\subsection{Ablation Study}
\noindent
\textbf{Different number of clients $K$ with various participation ratio $\gamma$.} We select the best baselines in \Cref{tab:CIFAR10&100} and conduct experiments on skewed CIFAR-100 under different number of clients $K$ with various participation ratio $\gamma$. As shown in \Cref{tab:clients}, to validate the impact of client number, we set $\gamma=0.4$ and $K\in \{20,40,60,80,100\}$. As the number of clients gradually increases, achieving convergence in FL becomes harder. Most methods experience a decline in accuracy, while our method consistently outperforms the baseline methods. To validate the impact of the participation ratio, we set $K=20$ and $\gamma\in \{0.2, 0.4, 0.6\}$. When $\gamma$ is small, the client data distributions among different rounds vary significantly, leading to divergent gradient directions. However, our method consistently achieves the best performance. {All experiments demonstrate that our method performs well under different numbers of clients with partial participation ratios.}

\begin{table*}
  \centering
  \caption{{Ablation study for Input Mixup.}}
  \begin{adjustbox}{max width=\linewidth}
  \begin{tabular}{@{}l|c|c|c|c|c|c|c|c|c@{}}
    \toprule
    Dataset &  \multicolumn{3}{c}{CIFAR-10} & \multicolumn{3}{c}{CIFAR-100} &\multicolumn{3}{c}{Tiny-ImageNet} \\
    \midrule

    NonIID ($\beta$) & \multicolumn{1}{c}{0.3} & \multicolumn{1}{c}{0.1} & \multicolumn{1}{c}{0.05} & \multicolumn{1}{c}{0.3} & \multicolumn{1}{c}{0.1} & \multicolumn{1}{c}{0.05} & \multicolumn{1}{c}{0.3} & \multicolumn{1}{c}{0.1} & \multicolumn{1}{c}{0.05} \\
    \midrule
    FedAvg w/o Input Mixup
    &${75.63}_{\pm0.75}$ &${68.35}_{\pm2.43}$ 
    &${60.47}_{\pm3.74}$ 
    &${41.97}_{\pm0.24}$ 
    &${39.57}_{\pm0.60}$ 
    &${38.12}_{\pm0.11}$  
    &${45.76}_{\pm0.21}$ 
    &${40.24}_{\pm0.31}$  
    &${36.09}_{\pm0.33}$ \\
    
    FedAvg w/ Input Mixup
    &${73.44}_{\pm2.20}$ &${66.13}_{\pm2.62}$  &${59.25}_{\pm2.89}$ 
    
    &${{46.43}_{\pm0.49}}$ &${41.41}_{\pm0.54}$  &${37.17}_{\pm1.13}$
    
    &${{47.36}_{\pm0.36}}$ &${41.61}_{\pm1.13}$  &${34.99}_{\pm0.43}$
    \\

    \midrule
    FedROD w/o Input Mixup
    &{${77.53}_{\pm0.86}$}
    &${{71.12}_{\pm1.33}}$&${{62.46}_{\pm3.29} }$ &${42.02}_{\pm0.15}$  &${40.15}_{\pm0.44}$ &${38.37}_{\pm0.18}$    
    &${46.18}_{\pm0.26}$  
    &${42.02}_{\pm0.14}$  
    &${37.81}_{\pm0.39}$  \\

    FedROD w/ Input Mixup
    &${76.71}_{\pm0.18}$ &{${71.70}_{\pm0.92}$} &${61.76}_{\pm0.14}$ 
    
    &${42.38}_{\pm0.15}$ &${40.06}_{\pm0.64}$  &${38.13}_{\pm0.23}$ 
    
    &${46.19}_{\pm0.17}$ &${41.83}_{\pm0.16}$  &${37.08}_{\pm0.11}$\\ 

 \midrule
     FedLC w/o Input Mixup
   &${76.76}_{\pm0.56}$ &${69.40}_{\pm1.40}$ &${52.71}_{\pm4.09}$ &${41.92}_{\pm0.39}$ &${39.85}_{\pm0.52}$ &${35.27}_{\pm0.16}$   
    &{${46.66}_{\pm0.12}$} 
    &${41.37}_{\pm0.14}$ 
    &${36.63}_{\pm0.38}$ \\

     FedLC w/ Input Mixup
    &${76.22}_{\pm0.41}$  &${69.82}_{\pm0.31}$ &${55.39}_{\pm0.58}$ &${45.37}_{\pm0.19}$ &${42.38}_{\pm0.42}$  &${38.01}_{\pm0.25}$  
    
    &${47.33}_{\pm0.13}$ &${43.22}_{\pm0.14}$  &${37.51}_{\pm0.31}$\\ 

    \midrule
    FedNH w/o Input Mixup
    &${75.30}_{\pm0.84}$  &${68.11}_{\pm2.15}$ &${60.54}_{\pm2.61}$ &{${44.74}_{\pm0.14}$} &{${41.74}_{\pm0.24}$} &{${39.61}_{\pm0.43}$}       &{${45.09}_{\pm1.95}$ }
    &{${42.31}_{\pm0.85}$}
    &{${38.87}_{\pm0.21}$} \\

    FedNH w/ Input Mixup
    &${74.16}_{\pm1.73}$ &{${66.87}_{\pm3.33}$} &${58.16}_{\pm2.50}$ &$\underline{{46.80}_{\pm0.47}}$ &\underline{${43.11}_{\pm0.29}$}  &${40.58}_{\pm0.70}$ 
 &$\underline{{47.47}_{\pm0.44}}$ &${42.62}_{\pm0.71}$  &\underline{${39.85}_{\pm0.37}$}\\  
 
    \midrule
    \rowcolor{gray!20}
    FedACD w/o Input Mixup
    &$\underline{{77.97}_{\pm0.41}}$ &$\underline{{72.59}_{\pm0.38}}$    
 &$\underline{{62.64}_{\pm0.32}}$
 &${{45.53}_{\pm0.30}}$  &${{42.61}_{\pm0.26}}$
 &$\underline{{40.76}_{\pm0.24}}$
 &${46.94}_{\pm0.36}$
 &$\underline{{43.40}_{\pm0.38}}$
 &${39.67}_{\pm0.25}$\\
    \rowcolor{gray!20}
    FedACD w/ Input Mixup
    &$\boldsymbol{{79.57}_{\pm0.02}}$ &$\boldsymbol{{73.13}_{\pm0.52}}$  &$\boldsymbol{{63.57}_{\pm0.46}}$ &$\boldsymbol{{49.08}_{\pm0.08}}$  &$\boldsymbol{{46.24}_{\pm0.17}}$ &$\boldsymbol{{43.22}_{\pm0.46}}$ 
    &$\boldsymbol{{49.62}_{\pm0.32}}$  &$\boldsymbol{{45.29}_{\pm0.14}}$  &$\boldsymbol{{41.44}_{\pm0.85}}$
    \\
    \bottomrule
  \end{tabular}
  \end{adjustbox}
  \label{tab:InputMix}
\end{table*}

\begin{table*}
  \centering
  \caption{{Ablation study for different local epochs.} All experiments are conducted on CIFAR-100 with Non-IID $\beta=0.1$. }
  \begin{adjustbox}{max width=\linewidth}
  \begin{tabular}{l|c|c|c|c|c|c|c}
    \toprule
    Local Epochs E & \multicolumn{1}{c}{3} & \multicolumn{1}{c}{5} & \multicolumn{1}{c}{7}& \multicolumn{1}{c}{9} & \multicolumn{1}{c}{11} & \multicolumn{1}{c}{13} & \multicolumn{1}{c}{15} \\
    \midrule
    FedAvg 
    &${40.76}_{\pm0.68}$ 
    &${39.57}_{\pm0.60}$ 
    &${38.89}_{\pm0.40}$ 
    &${38.51}_{\pm0.13}$
    &${37.81}_{\pm0.24}$
    &${37.41}_{\pm0.21}$
    &${37.32}_{\pm0.36}$
    \\
    \midrule
    CReFF     
    &${36.93}_{\pm0.52}$ 
    &${37.71}_{\pm0.59}$ 
    &${38.38}_{\pm0.37}$ 
    &${38.48}_{\pm0.28}$ 
    &${38.23}_{\pm0.15}$
    &${38.78}_{\pm0.22}$
    &${38.24}_{\pm0.10}$
    \\
    FedROD 
    &${41.72}_{\pm0.38}$ 
    &${40.15}_{\pm0.44}$ 
    &${39.23}_{\pm0.28}$ 
    &{${38.84}_{\pm0.28}$} 
    &${37.28}_{\pm0.07}$
    &${37.47}_{\pm0.09}$
    &${38.00}_{\pm0.24}$
 \\
 FedNTD
    &${41.61}_{\pm0.33}$ 
    &${39.90}_{\pm0.31}$ 
    &${39.25}_{\pm0.22}$ 
    &${{39.42}_{\pm0.21}}$ 
    &${39.30}_{\pm0.33}$
    &${39.03}_{\pm0.29}$
    &${38.51}_{\pm0.18}$
 \\
 FedDecorr
    &${40.22}_{\pm0.44}$ 
    &${38.89}_{\pm0.19}$ 
    &${38.33}_{\pm0.28}$ 
    &{${38.18}_{\pm0.28}$} 
    &${37.84}_{\pm0.29}$
    &${36.87}_{\pm0.25}$
    &${36.92}_{\pm0.33}$
 \\
     FedLC 
    &${41.40}_{\pm0.15}$ 
    &${39.85}_{\pm0.52}$ 
    &${39.53}_{\pm0.60}$ 
    &${38.68}_{\pm0.15}$ 
    &${37.20}_{\pm0.23}$
    &${37.63}_{\pm0.40}$
    &${36.74}_{\pm0.17}$
    \\
    FedNH 
    &\underline{${42.17}_{\pm0.55}$} 
    &\underline{${41.74}_{\pm0.24}$} 
    &\underline{${41.34}_{\pm0.42}$} 
    &\underline{${39.62}_{\pm0.31}$} 
    &\underline{${39.55}_{\pm0.30}$}
    &\underline{${39.93}_{\pm0.24}$}
    &\underline{${39.36}_{\pm0.32}$}
    \\
    \midrule
    \rowcolor{gray!20}
    FedACD
    &$\boldsymbol{{46.12}_{\pm0.39}}$ 
    &$\boldsymbol{{46.24}_{\pm0.17}}$ 
    &$\boldsymbol{{45.46}_{\pm0.01}}$ 
    &$\boldsymbol{{45.36}_{\pm0.25}}$ 
    &$\boldsymbol{{44.73}_{\pm0.35}}$
    &$\boldsymbol{{44.68}_{\pm0.29}}$
    &$\boldsymbol{{44.85}_{\pm0.14}}$
    \\
    \bottomrule
  \end{tabular}
  \end{adjustbox}
  \label{tab:localEpoch}
\end{table*}
\noindent
\textbf{Input Mixup.} We incorporate Input Mixup into local training. For a fair comparison, we select several methods compatible with Input Mixup in \Cref{tab:CIFAR10&100} and incorporate Input Mixup into the training process.
As shown in \Cref{tab:InputMix}, our method without Input Mixup outperforms the baseline methods without Input Mixup under all settings. After incorporating Input Mixup, our method consistently outperforms the baseline methods with Input Mixup by a large margin under various data heterogeneity. Unlike other methods where Input Mixup mainly augments data diversity, we leverage it to improve the reliability of the evaluation process of the local probability matrix by ensuring that the means of the probability outputs of samples belonging to classes with limited samples are closer to the true means, thus incorporating Input Mixup can improve the performance of our method.

\begin{table}[t]
  \centering
  \caption{{Ablation study for parameter $\lambda$.} All experiments are conducted on CIFAR-100 with Non-IID $\beta=0.1$. }
  \begin{adjustbox}{max width=\linewidth}
  \begin{tabular}{l|c|c|c|c}
    \toprule
    $\lambda$ & \multicolumn{1}{c}{0.5} & \multicolumn{1}{c}{1} & \multicolumn{1}{c}{1.5}& \multicolumn{1}{c}{2} \\
    \midrule
    FedACD
    &${{46.05}_{\pm0.21}}$ 
    &$\boldsymbol{{46.24}_{\pm0.17}}$ 
    &${{45.79}_{\pm0.18}}$ 
    &${{45.35}_{\pm0.28}}$ 
    \\
    \bottomrule
  \end{tabular}
  \end{adjustbox}
  \label{tab: lambda}
\end{table}

\begin{table}[t]
  \centering
  \caption{{Ablation study for the aggregation method.} All experiments are conducted on CIFAR-100 with Non-IID $\beta \in \{0.3, 0.1, 0.05\}$. }
  \begin{adjustbox}{max width=\linewidth}
  \begin{tabular}{l|c|c|c}
    \toprule
    NonIID($\beta)$ & 0.3 & 0.1 &0.05 \\
    \midrule
    $\text{FedACD}_{\text{base}}$
    &${{48.63}_{\pm0.41}}$ 
    &${{45.51}_{\pm0.21}}$ 
    &${{42.76}_{\pm0.45}}$ 
    \\
    \midrule
    FedACD
    &$\boldsymbol{{49.08}_{\pm0.08}}$
    &$\boldsymbol{{46.24}_{\pm0.17}}$ 
    &$\boldsymbol{{43.22}_{\pm0.46}}$ \\
    \bottomrule
  \end{tabular}
  \end{adjustbox}
  \label{tab:Vm}
\end{table}
\noindent
\textbf{Different local epoch $E$.} We select the best baselines in \Cref{tab:CIFAR10&100} and vary the number of local training epoch $E\in\{3,5,7,9,11,13,15\}$ for each client in every round. As shown in \Cref{tab:localEpoch}, our method consistently achieves the best performance across different settings. Meanwhile, it is noteworthy that as the local training epoch $E$ increases, the performance of most methods exhibits a declining trend. However, {our method maintains relatively stable performance for different local epochs, especially with a large $E\in\{9,11,13,15\}$.} The reason is that with the increase of local training epochs, the local models tend to overfit the local data distributions. This increases the risk of local models on heterogeneous data distributions over clients, i.e., the adaptability of local models becomes worse.

\noindent
\textbf{Parameter $\lambda$.} Parameter $\lambda$ in \Cref{L} controls the contribution of $\mathcal{L}_{1}$ and $\mathcal{L}_{2}$.  We conduct experiments with $\lambda \in \{0.5, 1, 1, 5, 2.0\}$. As shown in \Cref{tab: lambda}, our method performs best with $\lambda$ = 1 and remains robust to variations of $\lambda$. Therefore, we set $\lambda=1$ in all experiments.

\noindent
\textbf{Aggregation method.}  We propose an aggregation method that allows local models with good adaptability over client distributions to have large aggregation weights. We conduct experiments by comparing our aggregation method with the aggregation method with uniform weights (denoted as $\text{FedACD}_{\text{base}}$). As shown in \Cref{tab:Vm}, our aggregation method consistently outperforms $\text{FedACD}_{\text{base}}$ under various data heterogeneity, demonstrating its effectiveness.

\section{Conclusion}
In this paper, we introduce the adaptability of local models, i.e., the average performance of local models on data distributions over clients, and focus on improving the adaptability of local models to enhance the performance of the global model. Extensive experiments on federated learning benchmarks demonstrate that our method achieves the well-performed global model that outperforms the baseline methods.

\begin{figure}[t]
  \centering
  \includegraphics[width=1\linewidth]{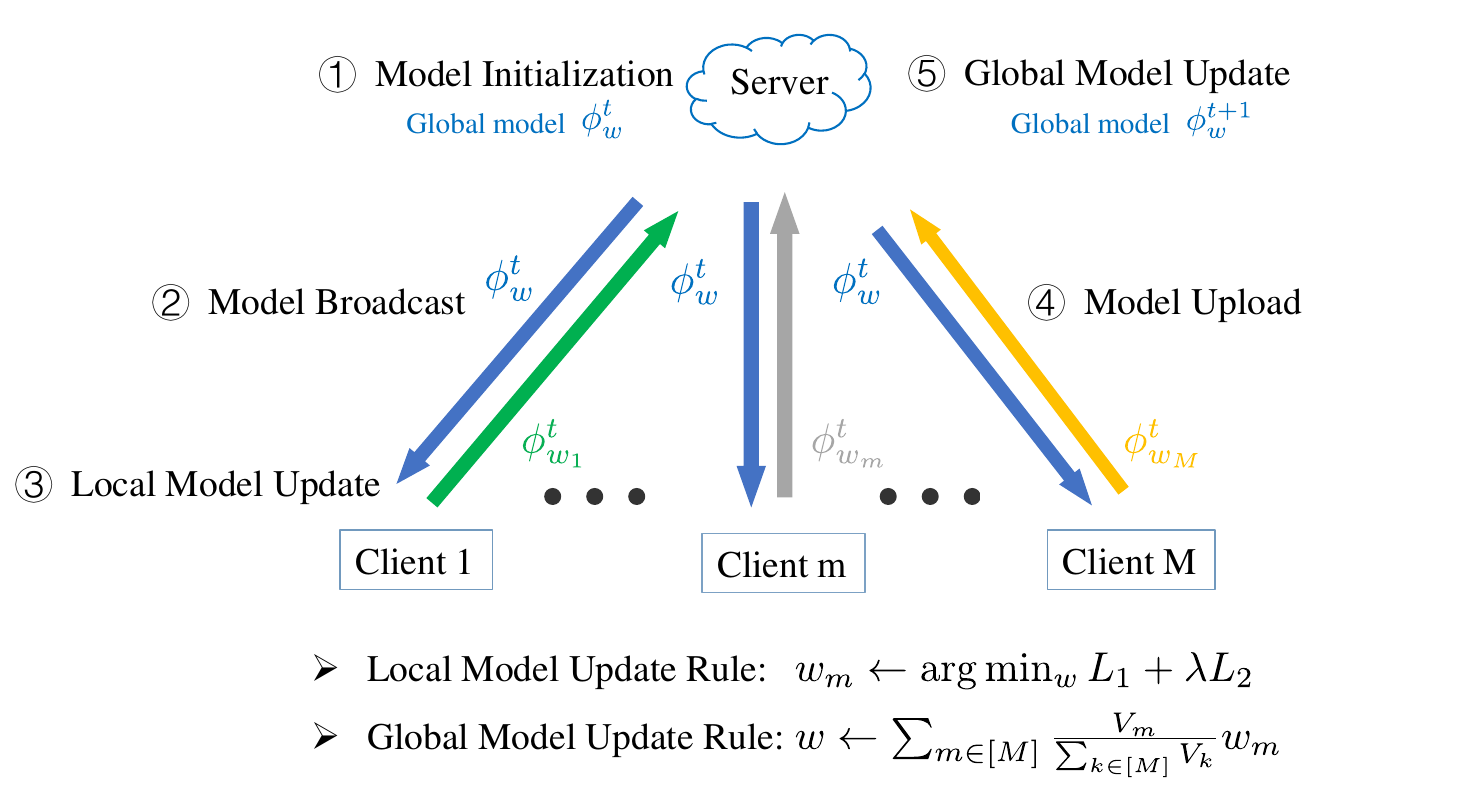}
    \caption{{The overall workflow of FedACD.}}
   \label{workflow}
\end{figure}
\begin{table}
  \centering
  \caption{The KL divergence between the probability matrix $P^m$ and the template matrix \( Q \). All experiments are conducted on CIFAR-100 with Non-IID $\beta \in \{0.3, 0.1, 0.05\}$. }
  \begin{adjustbox}{max width=\linewidth}
  \begin{tabular}{l|c|c|c}
    \toprule
    NonIID($\beta)$ & 0.3 & 0.1 &0.05 \\
    \midrule
    FedAvg
    &$4.06_{\pm 0.05}$
    &$4.99_{\pm 0.09}$
    &$5.59_{\pm 0.16}$
    \\
    \midrule
    FedProx
    &$4.05_{\pm 0.01}$
    &$4.96_{\pm 0.11}$
    &$5.61_{\pm 0.18}$
    \\
    FedNova
    &$4.06_{\pm 0.01}$
    &$4.97_{\pm 0.10}$
    &$5.58_{\pm 0.18}$
    \\
    \midrule
    CReFF
    &$4.06_{\pm 0.05}$
    &$4.99_{\pm 0.09}$
    &$5.59_{\pm 0.16}$
    \\
    FedROD
    &$2.99_{\pm 0.03}$
    &$3.05_{\pm 0.01}$
    &$3.07_{\pm 0.02}$
    \\
    FedNTD
    &$2.63_{\pm 0.01}$
    &$\underline{2.79_{\pm 0.03}}$
    &$\underline{3.01_{\pm 0.03}}$
    \\
    FedDecorr
    &$4.03_{\pm 0.02}$
    &$4.91_{\pm 0.10}$
    &$5.49_{\pm 0.17}$
    \\
    FedLC
    &$3.45_{\pm 0.04}$
    &$3.49_{\pm 0.02}$
    &$3.67_{\pm 0.06}$
    \\
    FedNH
    &$\underline{2.59_{\pm 0.01}}$
    &$2.97_{\pm 0.08}$
    &$3.36_{\pm 0.14}$
    \\
    \midrule
    FedACD
    &$\boldsymbol{0.83_{\pm 0.01}}$
    &$\boldsymbol{0.70_{\pm 0.02}}$
    &$\boldsymbol{0.60_{\pm 0.03}}$
    \\
    \bottomrule
  \end{tabular}
  \end{adjustbox}
  \label{KLS}
\end{table}
\appendix
\section{Appendix}
\subsection{The overall workflow of FedACD}
We provide the overall workflow of our FedACD in \Cref{workflow}, which includes the following stages:
\begin{enumerate}
    \item \textbf{Model Initialization.\quad}The server initializes the global model $\phi_w^1$ in the first round of communication.
    \item \textbf{Model Broadcast.\quad}The server broadcasts the current global model $\phi_w^t$ to all clients. Each client receives the global model to initialize their local model $\{\phi_{w_1}^t, \phi_{w_2}^t, ..., \phi_{w_M}^t\}$.
    \item \textbf{Local Model Update.\quad}Each client $m$ updates its local model $\phi_{w_m}^t$ based on the local model update rule ${w}_m \leftarrow {argmin}_{{w}} \;\mathcal{L}_{1}+\lambda \mathcal{L}_{2}$ and compute the aggregation score $V_m$, $m\in[M]$.
    \item \textbf{Model Upload.\quad}After local updates, client $m$ uploads the local model $\phi_{w_m}^t$  and the aggregation score $V_m$ to the server, $m\in[M]$.
    \item \textbf{Global Model Update.\quad} The server aggregates the local models to update the global model $\phi_w^{t+1}$ based on the global model update rule $w \leftarrow\;\sum_{m \in [M]} \frac{{V}_m}{\sum_{k \in [M]}{V}_k}{w}_m$ for the next round of communication.
\end{enumerate}

\subsection{The KL divergence between the probability matrix $P^m$ and the template matrix \( Q \)}
In our paper, the KL divergence between the probability matrix $P^m$ and the template matrix \( Q \) is used to measure the degree to which the constraints $ P^m_{ii} = P^m_{jj} $ for $i \neq j $ are satisfied. Small KL divergence $\text{KL}(P^m || Q)$ means that $P^m$ is close to the optimization objective. In the experiments, we record the probability matrix \( P^m \) of each client for all methods and compute the KL divergence $\text{KL}(P^m || Q)$. The results in \Cref{KLS} show that our method achieves smaller KL divergence $\text{KL}(P^m || Q)$ compared to baseline methods, demonstrating that the probability matrix \( P^m \) in our method is closer to the optimization objective than the probability matrix \( P^m \) in baseline methods.\\

\clearpage

\bibliographystyle{named}
\bibliography{main}

\begin{thebibliography}{}

\bibitem[\protect\citeauthoryear{Acar \bgroup \em et al.\egroup }{2020}]{acar2021federated}
Durmus Alp~Emre Acar, Yue Zhao, Ramon Matas, Matthew Mattina, Paul Whatmough, and Venkatesh Saligrama.
\newblock Federated learning based on dynamic regularization.
\newblock In {\em Proceedings of the 8th International Conference on Learning Representations (ICLR'20)}, 2020.

\bibitem[\protect\citeauthoryear{Adnan \bgroup \em et al.\egroup }{2022}]{adnan2022federated}
Mohammed Adnan, Shivam Kalra, Jesse~C Cresswell, Graham~W Taylor, and Hamid~R Tizhoosh.
\newblock Federated learning and differential privacy for medical image analysis.
\newblock {\em Scientific Reports}, 12(1):1953, 2022.

\bibitem[\protect\citeauthoryear{Chen and Chao}{2021}]{chen2021bridging}
Hong-You Chen and Wei-Lun Chao.
\newblock On bridging generic and personalized federated learning for image classification.
\newblock In {\em Proceedings of the 9th International Conference on Learning Representations (ICLR'21)}, 2021.

\bibitem[\protect\citeauthoryear{Dai \bgroup \em et al.\egroup }{2023}]{Dai2022TacklingDH}
Yutong Dai, Zeyuan~Johnson Chen, Junnan Li, Shelby Heinecke, Lichao Sun, and Ran Xu.
\newblock Tackling data heterogeneity in federated learning with class prototypes.
\newblock In {\em Proceedings of the 40th AAAI Conference on Artificial Intelligence (AAAI'23)}, 2023.

\bibitem[\protect\citeauthoryear{Deng \bgroup \em et al.\egroup }{2009}]{deng2009imagenet}
Jia Deng, Wei Dong, Richard Socher, Li-Jia Li, Kai Li, and Li~Fei-Fei.
\newblock Imagenet: A large-scale hierarchical image database.
\newblock In {\em Proceedings of the IEEE Conference on Computer Vision and Pattern Recognition (CVPR'09)}, pages 248--255, 2009.

\bibitem[\protect\citeauthoryear{Duan \bgroup \em et al.\egroup }{2021}]{9141436}
Moming Duan, Duo Liu, Xianzhang Chen, Renping Liu, Yujuan Tan, and Liang Liang.
\newblock Self-balancing federated learning with global imbalanced data in mobile systems.
\newblock {\em IEEE Transactions on Parallel and Distributed Systems}, 32(1):59--71, 2021.

\bibitem[\protect\citeauthoryear{Hard \bgroup \em et al.\egroup }{2020}]{hard2020training}
Andrew Hard, Kurt Partridge, Cameron Nguyen, Niranjan Subrahmanya, Aishanee Shah, Pai Zhu, Ignacio~Lopez Moreno, and Rajiv Mathews.
\newblock Training keyword spotting models on non-iid data with federated learning.
\newblock {\em arXiv}, abs/2005.10406, 2020.

\bibitem[\protect\citeauthoryear{He \bgroup \em et al.\egroup }{2016}]{he2016deep}
Kaiming He, Xiangyu Zhang, Shaoqing Ren, and Jian Sun.
\newblock Deep residual learning for image recognition.
\newblock In {\em Proceedings of the IEEE Conference on Computer Vision and Pattern Recognition (CVPR'16)}, pages 770--778, 2016.

\bibitem[\protect\citeauthoryear{Jiang \bgroup \em et al.\egroup }{2021}]{jiang2021fedspeech}
Ziyue Jiang, Yi~Ren, Ming Lei, and Zhou Zhao.
\newblock Fedspeech: Federated text-to-speech with continual learning.
\newblock In {\em Proceedings of the 30th International Joint Conference on Artificial Intelligence (IJCAI'21)}, 2021.

\bibitem[\protect\citeauthoryear{Kairouz \bgroup \em et al.\egroup }{2019}]{Kairouz2019AdvancesAO}
Peter Kairouz, H.~Brendan McMahan, Brendan Avent, Aur{\'e}lien Bellet, Mehdi Bennis, and et~al.
\newblock Advances and open problems in federated learning.
\newblock {\em Foundations and Trends in Machine Learning}, 14:1--210, 2019.

\bibitem[\protect\citeauthoryear{Kang \bgroup \em et al.\egroup }{2020}]{kang2020reliable}
Jiawen Kang, Zehui Xiong, Dusit Niyato, Yuze Zou, Yang Zhang, and Mohsen Guizani.
\newblock Reliable federated learning for mobile networks.
\newblock {\em IEEE Wireless Communications}, 27(2):72--80, 2020.

\bibitem[\protect\citeauthoryear{Karimireddy \bgroup \em et al.\egroup }{2020}]{Karimireddy2019SCAFFOLDSC}
Sai~Praneeth Karimireddy, Satyen Kale, Mehryar Mohri, Sashank Reddi, Sebastian Stich, and Ananda~Theertha Suresh.
\newblock Scaffold: Stochastic controlled averaging for federated learning.
\newblock In {\em Proceedings of the 37th International Conference on Machine Learning (ICML'20)}, pages 5132--5143, 2020.

\bibitem[\protect\citeauthoryear{Khan \bgroup \em et al.\egroup }{2021}]{khan2021federated}
Latif~U Khan, Walid Saad, Zhu Han, Ekram Hossain, and Choong~Seon Hong.
\newblock Federated learning for internet of things: Recent advances, taxonomy, and open challenges.
\newblock {\em IEEE Communications Surveys \& Tutorials}, 23(3):1759--1799, 2021.

\bibitem[\protect\citeauthoryear{Krizhevsky}{2009}]{Krizhevsky2009LearningML}
Alex Krizhevsky.
\newblock Learning multiple layers of features from tiny images.
\newblock {\em Technical report}, pages 32--33, 2009.

\bibitem[\protect\citeauthoryear{Le and Yang}{2015}]{le2015tiny}
Ya~Le and Xuan Yang.
\newblock Tiny imagenet visual recognition challenge.
\newblock {\em CS 231N}, 7(7):3, 2015.

\bibitem[\protect\citeauthoryear{Lee \bgroup \em et al.\egroup }{2022}]{lee2022preservation}
Gihun Lee, Minchan Jeong, Yongjin Shin, Sangmin Bae, and Se-Young Yun.
\newblock Preservation of the global knowledge by not-true distillation in federated learning.
\newblock In {\em Advances in Neural Information Processing Systems 35 (NeurIPS'22)}, pages 38461--38474, 2022.

\bibitem[\protect\citeauthoryear{Li \bgroup \em et al.\egroup }{2019}]{Li2019ASO}
Q.~Li, Zeyi Wen, Zhaomin Wu, and Bingsheng He.
\newblock A survey on federated learning systems: Vision, hype and reality for data privacy and protection.
\newblock {\em IEEE Transactions on Knowledge and Data Engineering}, 35:3347--3366, 2019.

\bibitem[\protect\citeauthoryear{Li \bgroup \em et al.\egroup }{2020}]{li2020federated}
Tian Li, Anit~Kumar Sahu, Manzil Zaheer, Maziar Sanjabi, Ameet Talwalkar, and Virginia Smith.
\newblock Federated optimization in heterogeneous networks.
\newblock In {\em Proceedings of the Machine Learning and Systems (MLSys'20)}, pages 429--450, 2020.

\bibitem[\protect\citeauthoryear{Li \bgroup \em et al.\egroup }{2022}]{li2022federated}
Qinbin Li, Yiqun Diao, Quan Chen, and Bingsheng He.
\newblock Federated learning on non-iid data silos: An experimental study.
\newblock In {\em Proceedings of the IEEE 38th International Conference on Data Engineering (ICDE'22)}, pages 965--978, 2022.

\bibitem[\protect\citeauthoryear{Luo \bgroup \em et al.\egroup }{2021}]{luo2021no}
Mi~Luo, Fei Chen, Dapeng Hu, Yifan Zhang, Jian Liang, and Jiashi Feng.
\newblock No fear of heterogeneity: Classifier calibration for federated learning with non-iid data.
\newblock In {\em Advances in Neural Information Processing Systems 34 (NeurIPS'21)}, pages 5972--5984, 2021.

\bibitem[\protect\citeauthoryear{McMahan \bgroup \em et al.\egroup }{2017}]{mcmahan2017communication}
Brendan McMahan, Eider Moore, Daniel Ramage, Seth Hampson, and Blaise~Aguera y~Arcas.
\newblock Communication-efficient learning of deep networks from decentralized data.
\newblock In {\em Proceedings of International Conference on Artificial Intelligence and Statistics (AISTATS'17)}, pages 1273--1282, 2017.

\bibitem[\protect\citeauthoryear{Menon \bgroup \em et al.\egroup }{2020}]{menon2020long}
Aditya~Krishna Menon, Sadeep Jayasumana, Ankit~Singh Rawat, Himanshu Jain, Andreas Veit, and Sanjiv Kumar.
\newblock Long-tail learning via logit adjustment.
\newblock In {\em Proceedings of the 8th International Conference on Learning Representations (ICLR'20)}, 2020.

\bibitem[\protect\citeauthoryear{Shang \bgroup \em et al.\egroup }{2022}]{shang2022federated}
Xinyi Shang, Yang Lu, Gang Huang, and Hanzi Wang.
\newblock Federated learning on heterogeneous and long-tailed data via classifier re-training with federated features.
\newblock In {\em Proceedings of the 31th International Joint Conference on Artificial Intelligence (IJCAI'22)}, pages 2218--2224, 2022.

\bibitem[\protect\citeauthoryear{Shi \bgroup \em et al.\egroup }{2023}]{shi2022towards}
Yujun Shi, Jian Liang, Wenqing Zhang, Vincent Tan, and Song Bai.
\newblock Towards understanding and mitigating dimensional collapse in heterogeneous federated learning.
\newblock In {\em Proceedings of the 11th International Conference on Learning Representations (ICLR'23)}, 2023.

\bibitem[\protect\citeauthoryear{Wang \bgroup \em et al.\egroup }{2020}]{wang2020tackling}
Jianyu Wang, Qinghua Liu, Hao Liang, Gauri Joshi, and H~Vincent Poor.
\newblock Tackling the objective inconsistency problem in heterogeneous federated optimization.
\newblock In {\em Advances in Neural Information Processing Systems 33 (NeurIPS'20)}, pages 7611--7623, 2020.

\bibitem[\protect\citeauthoryear{Zhang \bgroup \em et al.\egroup }{2018}]{zhang2017mixup}
Hongyi Zhang, Moustapha Cisse, Yann~N Dauphin, and David Lopez-Paz.
\newblock mixup: Beyond empirical risk minimization.
\newblock In {\em Proceedings of the 6th International Conference on Learning Representations (ICLR'18)}, 2018.

\bibitem[\protect\citeauthoryear{Zhang \bgroup \em et al.\egroup }{2022}]{zhang2022federated}
Jie Zhang, Zhiqi Li, Bo~Li, Jianghe Xu, Shuang Wu, Shouhong Ding, and Chao Wu.
\newblock Federated learning with label distribution skew via logits calibration.
\newblock In {\em Proceedings of the 39th International Conference on Machine Learning (ICML'22)}, pages 26311--26329, 2022.

\bibitem[\protect\citeauthoryear{Zhao \bgroup \em et al.\egroup }{2018}]{zhao2018federated}
Yue Zhao, Meng Li, Liangzhen Lai, Naveen Suda, Damon Civin, and Vikas Chandra.
\newblock Federated learning with non-iid data.
\newblock {\em arXiv}, abs/1806.00582, 2018.

\bibitem[\protect\citeauthoryear{Zheng \bgroup \em et al.\egroup }{2021}]{zheng2021federated}
Wenbo Zheng, Lan Yan, Chao Gou, and Fei-Yue Wang.
\newblock Federated meta-learning for fraudulent credit card detection.
\newblock In {\em Proceedings of the 30th International Conference on International Joint Conferences on Artificial Intelligence (IJCAI'21)}, pages 4654--4660, 2021.

\end{thebibliography}

\end{document}